\newtheorem{theorem}{Theorem}
\newtheorem{definition}{Definition}
\title{\textit{FP=\textbf{X}INT}: Representing Neural Networks via Low‑Bit Series Basis Functions}
\author{
    Boyang Zhang$^{1,2,3}$, Daning Cheng$^{1}$\thanks{Corresponding author}, Yunquan Zhang$^{1}$, Jiake Tian$^{2,5}$, Jing Li$^{4}$, Fangming Liu$^{2}$ \\ \\
    $^{1}$Institute of Computing Technology, Chinese Academy of Sciences, Beijing, China \\
    $^{2}$Pengcheng Laboratory, Shenzhen, China \\
    $^{3}$University of Chinese Academy of Sciences, Beijing, China\\
    $^{4}$Harbin Institute of Technology, Shenzhen, China \\
    $^{5}$South China University of Technology, Guangzhou, China\\ 
    zhangby01@pcl.ac.cn, \{chengdaning, zyq\}@ict.ac.cn, \\ mijiake@mail.scut.edu.cn, jingli.phd@hotmail.com, fangminghk@gmail.com
}
\begin{document}
\maketitle
\begin{abstract}
Post-Training Quantization (PTQ) converts pre-trained Full-Precision (FP) models into quantized versions without training. While existing methods reduce size and computational costs, they also significantly degrade performance and quantization efficiency at extremely low settings due to quantization noise. We introduce a deep model series expansion framework to address this issue, enabling rapid and accurate approximation of unquantized models without calibration sets or fine-tuning. This is the first use of series expansion for neural network quantization. Specifically, our method expands the FP model into multiple low-bit basis models. To ensure accurate quantization, we develop low-bit basis model expansions at different granularities (tensor, layer, model), and theoretically confirm their convergence to the dense model, thus restoring FP model accuracy. Additionally, we design AbelianAdd/Mul operations between isomorphic models in the low-bit expansion, forming an Abelian group to ensure operation parallelism and commutativity. The experiments show that our algorithm achieves state-of-the-art performance in low-bit settings; for example, 4-bit quantization of ResNet-50 surpasses the original accuracy, reaching 77.03\%. The code will be made public.
\end{abstract}    
\vspace{-0.6cm}
\section{Introduction}
The huge computational cost and memory usage in deep learning have been unable to meet the needs of resource-constrained devices. Researchers have studied model quantization techniques, which aim to convert high-precision parameters and activations into low-precision parameters and activations. Quantization methods are mainly divided into two categories, Quantization-Aware Training (QAT) and Post-Training Quantization (PTQ). QAT retrains the model on a labeled training dataset. Although the accuracy loss is small, it is time-consuming and computationally dense. The PTQ method is the mainstream of existing quantization methods, and only a small number of unlabeled samples are needed to quantize the trained model. PTQ does not require retraining and is suitable for rapid deployment on resource-constrained devices. The existing PTQ method can maintain good prediction when 8-bit quantization is used, but there is a gap with the full-precision model when 4-bit or 2-bit quantization is used.
This is because as the number of representation bits decreases, the error or noise caused by quantization increases, resulting in a decrease in network performance. The existing PTQ \cite{banner2018aciq,nagel2020up,li2021brecq,liu2023pd,shang2024enhancing} method uses various techniques to calibrate the quantized parameters to reduce the decline in accuracy, but it is difficult to reach the original accuracy under low-bit quantization and the cost time is long.
Moreover, the noise caused by quantization is inherent to the model, and the accuracy cannot be reduced by improving the hardware's computility.
\begin{figure}
\centering
\includegraphics[width=8.2cm,height=2.3cm]{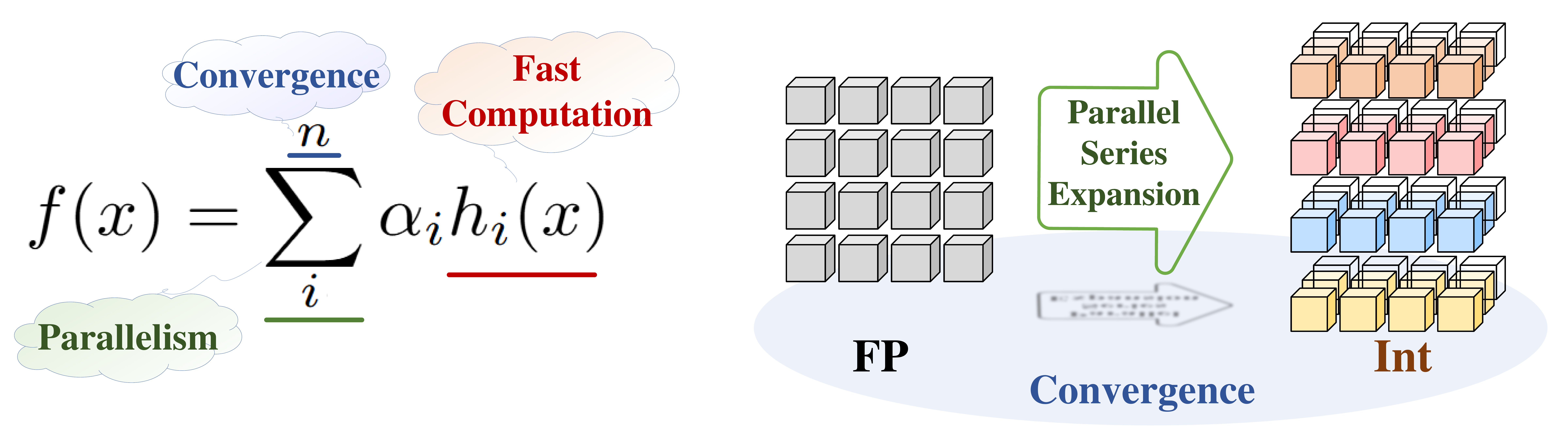}
\caption{The general form of series of a function $f(x)$. Usually, $h_i(x)$ is the computation-friendly function. The addition operation is a parallel-friendly operation. We expect the convergence speed of series expansion to be fast, which means $f(x)-\sum_{i=1}^n h_i(x)$ is small enough when $n$ is not too large.}
\vspace{-0.3cm}
\label{general form}
\end{figure}

In the context of PTQ, the goal is to preserve the original accuracy of the model even when using extremely low-bit quantization, which facilitates parallel computing. With advancements in hardware capabilities, the ability to maintain the original accuracy becomes even more pronounced. Series expansion offers a systematic mathematical approach to approximate functions with the desired level of accuracy. However, traditional series expansions, such as the Taylor and Fourier series, are not directly applicable to deep learning models, as they suffer from issues like the curse of dimensionality and high computational costs.

To address these challenges, inspired by function series expansion, we are the first to use series expansion in deep learning quantization methods.
We introduce a model series expansion framework that transforms the computationally inefficient original model into a computationally efficient basis function model class. As shown in Figure \ref{general form}, taking quantization as an example, we represent the original Full-Precision (FP) model through multiple low-bit basis models, achieving similar accuracy. Specifically, we progressively expand at both the tensor and layer levels.
First, we develop the tensor-level and layer-level low-bit expansions and theoretically demonstrate that the expansion converges exponentially to the original tensor or layer. Building on these tensor and layer expansions, we propose a global model series expansion strategy. For efficient operations within and across isomorphic base models, we design the operations AbelianAdd and AbelianMul, allowing the combination of these operations and the base models to form an Abelian group. The series expansion framework thus enables efficient interactions among isomorphic models and constructs a set of base models composed of low-bit or sparse calculations. Finally, we establish convergence to ensure both computational accuracy and efficiency.
Experiments show that our algorithm achieves state-of-the-art performance in low-bit settings. For instance, in ResNet-50, 4-bit quantization surpasses the original accuracy, reaching 77.03\%. Moreover, the parallelism in our expansion approach enables quantization speeds that exceed most existing methods.

The following are the main contributions of this paper: 1) We design the first model series expansion framework by series expanding the computationally inefficient original model into a computationally efficient basis function model class. 2) We design and construct tensor-level, single-layer-level, and model-level expansions and prove their convergence and efficiency. 3) Experiments show that our expansion framework has a significant improvement for extremely low-bit quantization.

\section{Target Formulation}
In this section, we introduce the paradigm of constructing series expansion in neural networks.
We aim to construct a function expansion class that can be quickly calculated to approximate the original function. The basic constructor of series expansion is as follows
\begin{equation}
    f(x) = \sum_{i=1}^{n} \alpha_i h_i(x),
    \label{series}
\end{equation}
where $f(x)$ is the original function, and $h_i(x)$ forms a class of basis functions that can be quickly calculated. This form uses linear combinations of basis functions to systematically approximate the original precision. At the same time, addition and multiplication construct an Abelian group, which is friendly to parallel computing such as AllReduce or MapReduce. In neural networks, $f(x)$, which is the model and we write as $model(sample)$ in the following parts, is expanded within the domain of the input data. We expect that for any sample in the dataset, the deep learning model, $model(sample)$, can be expanded into a specific series, i.e., $ model(sample) = \sum_{i=1}^{n} \alpha_i h_i(sample)$, to accelerate the inference speed of the model. However, for deep neural networks, traditional series expansions such as multidimensional Fourier series are not applicable. Traditional series face the curse of dimensionality, especially when the model input dimension is large, the number of terms in the traditional series will grow exponentially. Therefore, we hope to redesign the series expansion formula to adapt to existing neural networks
\begin{equation}
    model(sample) =  \sum_{\Xi_i}^n \hat{model}_i(sample).
    \label{target}
\end{equation}
Where $\sum_{\Xi_i}^n$ represents a new operation to replace the multiplication and addition operation in  Eq.~\ref{series}, which needs to form an Abelian group and is parallel-friendly. $\hat{model}_i$ represents a computationally efficient basis function. 
This process aims to construct a computationally efficient basis function that can ensure the convergence of the series on the abelian group. Thus the losslessness of the expansion formula is guaranteed.

\section{ Low-Bit Series Expansion Algorithm}
Since the computational performance of low-bit integers is higher than that of floating-point numbers. Therefore, we perform series expansion on the original neural network, and the expansion terms are constructed through low-bit quantized networks. We construct the low-bit expansion paradigm of the model level by level (tensor-layer-model) and prove that it converges to the unquantized network.

\subsection{Tensor Low-Bit Expansion}
We treat integer-quantized functions as the computational kernel for constructing basis functions. To illustrate the concept of quantized basis functions, we provide the following example. Suppose there are two data objects, $input_1$ and $input_2$, to be subjected to a computing operation, such as multiplication. After the quantization process \cite{yao2021hawq}, we have $Q_1 = \text{int}(\frac{input_1}{scale_1})$ and $Q_2 = \text{int}(\frac{input_2}{scale_2})$, and obtain
$Q_{output} = \text{int}(\frac{input_1 \times input_2}{scale_{output}}) \approx \text{int}( Q_1 \times Q_2 \times \frac{scale_1 \times scale_2}{scale_{output}})$, $scale_{output}$, $scale_1$, and $scale_2$ are precalculated scale factors that depend on the distributions of $input_1$, $input_2$, and the output. $Q_i$ is stored as a lower-precision data type, such as an integer. All $scale$ terms can be pre-calculated and established ahead of time. Then, during the entire inference process, only the $Q_i$ value needs to be calculated, which is fast. 

We set the tensor which will be quantized as $\mathbf{M}=(m_1,m_2,...,m_n),n\in\mathcal{N}$ and convert it into $m$ bits integer.
We analyze several methods of quantization one by one.
For symmetric quantization, the zero point of the quantized number is equal to the original vector, and for asymmetric, the quantization zero point is equal to the asymmetric point bias, represented by $bias*M_{nsy}$. For (non-) saturated quantization, $clip^{+/-}$ is used, and all $v_i>clip^+$ or $v_j<clip^-$ are set to $clip^{+/-}$, where the resulting difference is represented by $M_{sa}$.
For different quantization methods, we construct series expansions to prove that they converge to the original neural network function.
We can use the following Theorem \ref{matrix expand} to expand the original dense float tensor $M$.

\begin{figure}
\centering
\includegraphics[width=7.9cm,height=6cm]{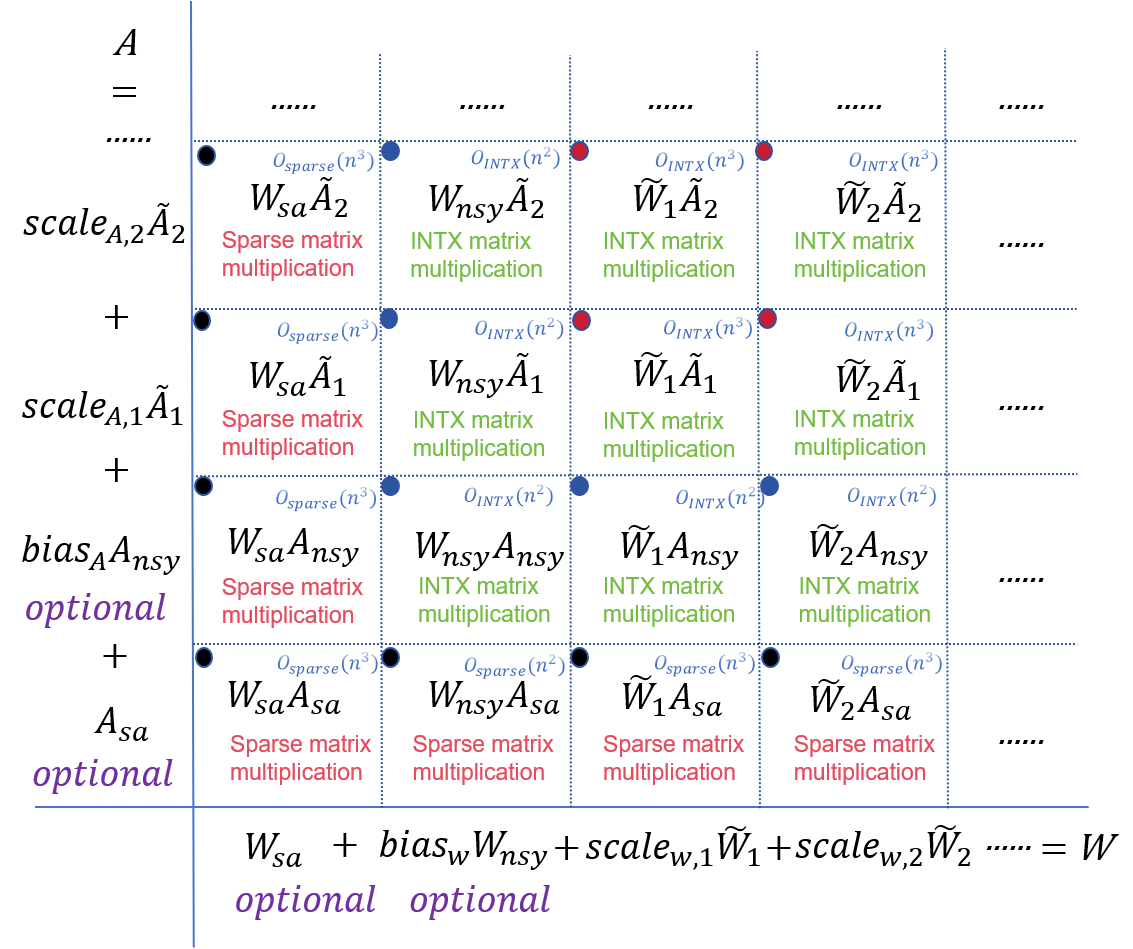}
\caption{The expansion of tensor multiplication, where $A$ and $W$ are $n*n$ tensor. The black point grid is produced by saturation quantization. The blue point grid is produced by non-symmetry quantization. The black and blue grid is optional. The red point is required by all quantization methods. The influence of black point grids is small in practice in the view of model performance. The blue grid computation complexity is small which is $O_{INTX}(n^2)$.  }
\vspace{-0.2cm}
\label{fig_rank:matrix multiplication expansion}
\end{figure}

\begin{theorem}
  $M=M_{sa}+bias*M_{nsy}+\sum_{i=1}^{n}scale_i*\widetilde{M}_{i}$,  where $M_{sa}$ is a sparse float tensor which is produced by saturation quantization,  $M_{nsy}$ is the tensor whose all elements are 1. $\widetilde{M}_{i}$ is the tensor whose all elements are INT(X) data type and $scale_i = 2^X*scale_{i+1}$.
\label{matrix expand}
\end{theorem}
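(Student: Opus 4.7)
The plan is to build the decomposition constructively in three stages, mirroring the structure of the right-hand side: first peel off the clipped residual, then the asymmetric zero-point shift, and finally expand the remaining bounded, centered tensor as a base-$2^X$ positional series.

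First I would define $M_{sa}$ explicitly as the saturation residual. Given clip thresholds $clip^\pm$, let $\mathrm{sat}(M)$ be the element-wise clipped tensor, and set $M_{sa} := M - \mathrm{sat}(M)$. By construction $M_{sa}$ is nonzero only at entries violating the clip range, which justifies its description as a sparse float tensor; moreover $M - M_{sa} = \mathrm{sat}(M)$ is uniformly bounded in $[clip^-,\,clip^+]$. Next, for the asymmetric part, pick the zero-point $bias$ as the midpoint (or the chosen asymmetric offset) of $[clip^-,\,clip^+]$ so that $M' := \mathrm{sat}(M) - bias\cdot M_{nsy}$ is centered in a symmetric interval $[-R,\,R]$. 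This isolates the $bias\cdot M_{nsy}$ summand and leaves a symmetric, bounded residual $M'$ to which integer expansion can be applied.

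The core step is then a recursive INT(X) expansion of $M'$. Choose $scale_1$ so that $M'/scale_1$ fits inside the signed INT(X) range $[-2^{X-1},\,2^{X-1}-1]$, and set $\widetilde M_1 := \mathrm{int}(M'/scale_1)$ using the standard round-to-nearest-integer map. Define the residual $R_1 := M' - scale_1\cdot \widetilde M_1$; by the rounding rule, $\|R_1\|_\infty \le scale_1/2$. Now iterate: with $scale_{i+1} = scale_i/2^X$ as dictated by the theorem, set $\widetilde M_{i+1} := \mathrm{int}(R_i/scale_{i+1})$ and $R_{i+1} := R_i - scale_{i+1}\widetilde M_{i+1}$. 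A short check shows $R_i/scale_{i+1}$ has magnitude at most $2^{X-1}$, so $\widetilde M_{i+1}$ indeed lies in INT(X), and $\|R_{i+1}\|_\infty \le scale_{i+1}/2 = scale_i/2^{X+1}$. Telescoping,
\begin{equation*}
M' = \sum_{i=1}^{n} scale_i\,\widetilde M_i + R_n,
\qquad \|R_n\|_\infty \le \tfrac{scale_1}{2^{nX+1}},
\end{equation*}
so the partial sums converge exponentially (at rate $2^{-X}$ per term) to $M'$. Combining with the first two stages yields the claimed identity, with equality in the limit $n\to\infty$ and exponentially small error for finite $n$.

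The main obstacle is bookkeeping rather than deep mathematics: one has to verify that at every recursion step the quotient $R_i/scale_{i+1}$ does not exceed the INT(X) range, which is exactly what pins down the constraint $scale_{i+1} = scale_i/2^X$ and the halving of the residual bound. Once the invariant $\|R_i\|_\infty \le scale_i/2$ is maintained, the geometric decay and hence convergence on the Abelian group of tensors under addition is automatic, so the identity in the theorem holds either exactly (in the limit) or up to the explicit exponentially small tail $R_n$ for truncated expansions.
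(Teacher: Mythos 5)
Your proposal is correct and follows essentially the same route as the paper's proof: peel off the saturation residual into the sparse tensor $M_{sa}$, remove the zero-point shift $bias\cdot M_{nsy}$, and then recursively quantize the residuals with scales related by $scale_i = 2^X scale_{i+1}$ so that the remainder decays geometrically. Your version is in fact slightly more careful than the paper's, since you explicitly verify the invariant that each quotient $R_i/scale_{i+1}$ stays within the INT(X) range, a step the paper leaves implicit.
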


\begin{proof}
\renewcommand{\qedsymbol}{}
We use the non-saturation and symmetry quantization methods as the base for analysis, and other quantization methods are based on this case.


In non-saturated symmetric quantization, for the tensor $M$, when applying the non-saturation and symmetry with a scaling factor $scale_1$ using $X$ bits, the result is $\widetilde{M}_1$. Then we define $R_1 = M-scale_1*\widetilde{M}_1$. $R_1$ has following properties that all elements in $R_0$ is smaller than $scale_1$. 
Next, the values $scale$ and $-scale$ as the max elements in non-saturation and symmetry quantization process in $R_1$'s quantization process and the $scale$ is calculated by $R_1$. We build $\widetilde{M}_2$ by non-saturation and symmetry quantizing $R_1$, and obtain $R_2 = M-scale_1*\widetilde{M}_1-scale_2*\widetilde{M}_2$. 
This procedure allows for parallel computation, and the relationship $scale_1 = 2^m * scale_2$ holds between the scales. Repeating above process, i.e., non-saturation and symmetry quantizing $R_i$, we have $M=\sum_{i=1}^n scale_i\widetilde{M}_i + R_{n+1}$. The max value in $R_{n+1}$ is smaller than $\frac{scale}{2^{Xn}}$. When $n\rightarrow\infty$, the maximum value in $R_{n+1}$ is converged to zero. The size of $\widetilde{M}_i$ is equal to the size of $M$.
For saturated symmetric quantization, the clipping operation in the saturated quantization method will produce errors, and we use another sparse tensor $M_{sa}$ to accommodate them. The sparse tensor $M_{sa}$ can be pre-calculated and is related to the data distribution, so $M_{sa}$ is a constant tensor. Then, this situation is converted to the unsaturated case.

When it comes to non-symmetry quantization methods, the main difference between non-symmetry and symmetry quantization methods is that in the non-symmetry quantization process, $v_i = bias + scale * q_i$ where $q_i$ is the quantization result for original tensor $v_i$. The $bias = \frac{v_{max}-v_{min}}{2} + v_{min}$ for non-saturation quantization and the $bias = \frac{clip^+ - clip^-}{2} + clip^-$ for saturation quantization. The $bias$ is the same for all elements for the tensor. Thus, $M = bias*M_{nsy}+ scale_1*\widetilde{M}_1 +R_1$, where $M_{nsy}$ is the tensor whose size is equal to the size of $M$. The method of processing $R_i$ is the same as the non-saturation and symmetry quantization case. Above proof also gives the process of building $M_{sa}$, $M_{nsy}$, $\widetilde{M}_i$.
\end{proof}

\subsection{Single-layer Low-Bit Expansion}
For a single-layer low-bit expansion in the neural network, we mainly focus on multiplication based on tensor low-bit expansion for the current deep learning model, the main computation kernel is tensor multiplication and it is the bottleneck of model computation performance.

Based on Theorem \ref{matrix expand}, replacing the $M$ tensor with weights $W$ and activations $A$, we can expand $W$ and $A$ can be expanded into $W=W_{sa}+bias_{w}*W_{nsy}+\sum_{i=1}^{n}scale_{w,i}*\widetilde{W}_{i}$ and  $A=A_{sa}+bias_A*A_{nsy}+\sum_{i=1}^{n}scale_{A,i}*\widetilde{A}_{i}$. We treat $bias$ as $scale_0$, $scale_{-1} = 1$, $W_{nsy}$ as $\widetilde{W}_0$, $W_{sa}$ as $\widetilde{W}_{-1}$, $A_{nsy}$ as $\widetilde{A}_0$, and $A_{sa}$ as $\widetilde{A}_{-1}$ for better description.

Then we have the following Eq.~\ref{mm low bit expansion}, and we term it as tensor multiplication low-bit expansion
\begin{equation}
\begin{aligned}
WA= \sum_{i,j\in[-1,n]} scale_{W,i}scale_{A,j}\widetilde{W}_i\widetilde{A}_j.
\label{mm low bit expansion}
\end{aligned}
\end{equation}

We also use Figure \ref{fig_rank:matrix multiplication expansion} to give a direct insight into tensor multiplication low-bit expansion.

For a single layer of the neural network $Layer(W, A)$, we construct the low-bit expansion of the layer as follows: 1) Expanding the original tensor multiplication. 2) Choosing one term of low-bit tensor multiplication, $scale_{W,i}scale_{A,j}W_iA_j$, as a kernel, build the layer $Layer(W_i,A_j)$. 
Then we have single-layer low-bit expansion as follows
\begin{equation}
\begin{aligned}
Layer(W,A) =  \sum_{i,j\in[-1,n]} scale_{w,i}scale_{A,j}Layer(W_i,A_j).
\label{mmlayerexpansion}
\end{aligned}
\end{equation}
We use $\widetilde{ layer}_{i,j}$ to indicate $Layer(W_i,A_j)$, and use $\hat{Layer}_{i,j}$ to indicate $ scale_{w,i}scale_{A,j}Layer(W_i,A_j)$. 

Similarly, due to the properties of tensor multiplication and the convergence of single-layer low-bit expansion, we can easily obtain the convergence of single-layer expansion. 
\subsection{Model Low-Bit Expansion}
The deep model consists of multiple single layers. We want to expand it into $t^2$ basis model. We can expand all the single layers whose calculation kernel is multiplication into low-order series items, and duplicate the remaining layers into basis models. Therefore, for the entire deep model, we can follow the steps below to build a low-order model: 1) Expand all tensor multiplications layer by layer. 2) For other layers, copy it into the basis model and the output of them and multiply $\frac{1}{t^2}$. 3) $\hat{model}_{i,j}, i,j\in[-1,n]$ is the the model, which always chose the $\hat{Layer}_{i,j}$ item of the low-order expansion of the 
original layer of the original model, and then perform multiple base model operations.

\begin{figure}
\centering
\includegraphics[width=8.4cm,height=3.9cm]{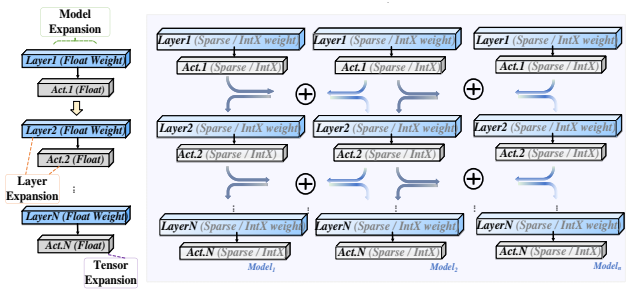}
\caption{Our series expansion at different levels and specific operation details. Finally, the FP model is expanded into the sum of multiple INT models.}
\vspace{-0.3cm}
\label{fig:model expansion}
\end{figure}

\textbf{Establishing the Abelian Group. } 
Unlike the series expansion of single layers and tensors, in the entire neural network, we prioritize designing the base model operations as well as parallelizing multiple series extension models. The original multiplication-addition cascade limits the parallelism and efficiency of the model series expansion. Therefore, we introduce basis functions and new operations to form an Abelian group, as the binary operations in an Abelian group align with the reduction operations used in AllReduce.

First, we expand the addition to adapt to neural networks. We define an operation AbelianAdd ({\Large$ \uplus$}) to construct an Abelian group.
\begin{definition}
{\Large$ \uplus$}: In neural networks, the output of each layer is multiplied by the scale, added, and broadcast to the input of the next layer.
\end{definition}
This operation applies to current neural networks because they are composed of layers with parameters and outputs, and AbelianAdd builds an Abelian group. We use $\sum_{\uplus,i=0}^{n}$ to describe the multi-AbelianAdd operations.
Similarly for multiplication, because the the parameter $w$ in layer $j$ is the integer multiple of $scale_j$, thus, the $w = scale_j * \widetilde{w}$ where $\widetilde{w}$ is the low-bit integer matrix.

The AbelianAdd satisfies the following property
\begin{align}
    &Model(\mathbf{W_1},\mathbf{A},sample)  \uplus   Model(\mathbf{W_2},\mathbf{A},sample) \notag \\
    = &Model(\mathbf{W_1}+\mathbf{W_2},\mathbf{A},sample), \label{pp1}
\end{align}
\begin{align}
    &Model(\mathbf{W},\mathbf{A_1},sample)  \uplus   Model(\mathbf{W},\mathbf{A_2},sample) \notag \\
    = &Model(\mathbf{W},\mathbf{A_1}+\mathbf{A_2},sample), \label{pp2}
\end{align}

\noindent where $\mathbf{W}=[W_1,W_2...]$ and $\mathbf{A}=[A_1,A_2,...]$.  $W_i$,$A_i$ is the weight and activation $i$th layer. $Model(\mathbf{W},\mathbf{A},sample)$ means the model whose weight is $\mathbf{W}$, activation is $\mathbf{A}$, and the input is $sample$. It is easy to see that $\hat{model}_{i,j} = Model(\mathbf{W}_i,\mathbf{A}_j,sample)$. 

Second, we define the AbelianMul $(\hat{*})$ to simplify description as following,
\begin{definition}
A vector $U=(u_1,u_2,...,u_k)$ AbelianMul the model means that the parameter $W_i$ in layer $i$  multiply the $u_i$,
$U \hat{*} model(w_i) = model(u_i*W_i)$.
\end{definition}
In the low-bit expansion of the model, the vector $U$ represents $scale$, and an Abelian group is formed between the operation (AbelianMul, AbelianAdd) and the isomorphic model. This allows the basis function model to be efficiently expanded. Note that the new operation does not change the model structure and can be regarded as a cascade expansion operation adapted to the neural network.


\textbf{Model Low-Bit Expansion. } 
In a multi-layer neural network, we construct a high computational efficiency $h_i$ in Eq.~\ref{series}  through the predefined AbelianAdd and AbelianMul and can avoid the dimensionality curse.
We use the isomorphic neural network but all parameters and input of the layer are $scale_j$'s integer multiple in layer $j$ or tensor matrix as $h_i$. 
Thus, we have the deep learning model's low-bit expansion that for any deep learning model $model$. We reconstruct Eq. \ref{target} ($ model(sample)=\sum_{\Xi_i}^n \hat{model}_i(sample)$),
\begin{theorem}  
  Form any locally continuous deep learning model $model$, whose core kernel is matrix multiplication or can be converted into multiplication,  weight is $\mathbf{W}$ and activation $\mathbf{A}$ we have the following expansion:  
  \begin{equation}  
  \begin{aligned}  
      model &= \sum_{\uplus,i,j \in[-1,n]} \hat{model}_{i,j} \\
      &=  \sum_{\uplus,i,j \in [-1,n]} \mathbf{scale_{i,j}} \hat{*}\widetilde{model}_{i,j},  
  \end{aligned}  
  \end{equation}  
  where $\hat{model}$ is the model whose parameters in a layer are sparse or integer multiple for $scale$ in $\mathbf{scale}$, and $\widetilde{model}$ is the model whose parameters in a layer is sparse or low-bit integer.  
\label{model expansion}  
\end{theorem}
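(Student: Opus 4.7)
The plan is to lift the already-proved tensor and single-layer expansions to the full model using the Abelian group structure $(\uplus,\hat{*})$ to sequence and collect the resulting terms.

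First I would apply Theorem~\ref{matrix expand} independently to every weight $W_\ell$ and activation $A_\ell$ at each layer $\ell$. With the convention that indices $-1$ and $0$ correspond to the saturation--sparse piece and the asymmetry bias, this rewrites each weight/activation as a sum indexed by $[-1,n]$ of scaled ``simple'' (sparse or low-bit integer) tensors. For every layer whose computational kernel is tensor multiplication, Eq.~\eqref{mmlayerexpansion} then yields $Layer_\ell(W_\ell,A_\ell) = \sum_{i,j\in[-1,n]} scale_{W,\ell,i}\, scale_{A,\ell,j}\, Layer_\ell(\widetilde{W}_{\ell,i},\widetilde{A}_{\ell,j})$, which is a per-layer version of the target decomposition.

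Next I would push these per-layer sums outside the model composition by induction on layer depth. Properties~\eqref{pp1} and \eqref{pp2} say exactly that additively splitting a weight or activation commutes with $Model(\cdot)$ through the AbelianAdd, so after processing every multiplicative layer I obtain an outer $\sum_\uplus$ indexed by a choice of $(i,j)$ per such layer. Layers whose kernel is \emph{not} a tensor multiplication (nonlinearities, normalisations, residual adds, etc.) are handled by the construction immediately preceding the theorem: each such layer is replicated verbatim into every basis model and its output is scaled by $1/t^2$, where $t=n+2$ is the number of index values. Because AbelianAdd sums outputs linearly before broadcasting them to the next input, the $t^2$ scaled copies collapse back to a single application of the original layer. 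Collecting everything yields $model = \sum_{\uplus,\,i,j\in[-1,n]} \hat{model}_{i,j}$, and factoring the layerwise scales out through AbelianMul gives $\hat{model}_{i,j} = \mathbf{scale}_{i,j}\,\hat{*}\,\widetilde{model}_{i,j}$ with $\widetilde{model}_{i,j}$ composed entirely of sparse or low-bit integer tensors, which is the second equality in Theorem~\ref{model expansion}.

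The hard part will be the commuting step across non-multiplicative layers. AbelianAdd is literally additive on outputs, so splitting a tensor into summands only commutes with \emph{linear} modules for free; for a nonlinear $\sigma$, the replicate-and-scale trick relies on the \emph{input} fed to each copy being the already-reassembled signal from the previous multiplicative layer, which in turn requires careful bookkeeping of the order in which the various $\uplus$-sums are carried out. Once this ordering is pinned down, convergence follows from Theorem~\ref{matrix expand}: its $O(2^{-Xn})$ tail bound on each tensor, together with local continuity of each layer, propagates to a bound on the truncation error of the full model expansion, giving the stated identity in the $n\to\infty$ limit.
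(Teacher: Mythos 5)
Your proposal follows essentially the same route as the paper's own (very terse) proof: use properties~\eqref{pp1} and~\eqref{pp2} to collapse the $\uplus$-sum of basis models into a single model whose weights and activations are the partial sums $\sum_i \mathbf{W}_i$ and $\sum_j \mathbf{A}_j$, then invoke the exponential convergence from Theorem~\ref{matrix expand} together with local continuity to recover the original model. Your version is in fact more careful than the paper's, which omits the layer-by-layer induction, the $1/t^2$ replication of non-multiplicative layers, and the ordering issue you correctly flag for nonlinear modules (resolved, as you note, by the fact that AbelianAdd reduces and rebroadcasts at every layer boundary by definition).
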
  

\begin{proof}  
\renewcommand{\qedsymbol}{}  
Based on the Eq. \ref{pp1} and Eq. \ref{pp2}, we have   
\begin{equation}  
\begin{aligned}  
    &\sum_{\uplus,i,j \in[-1,n]} \hat{model}_{i,j} \\
    &=\sum_{\uplus,i,j \in[-1,n]} Model(\mathbf{W}_i,\mathbf{A}_j,sample)\\
    &=Model\left(\sum_{i \in[-1,n]}\mathbf{W}_i,\sum_{j \in[-1,n]} \mathbf{A}_j,sample\right).  
\end{aligned}  
\end{equation}  
\end{proof}  
Based on Theorem \ref{matrix expand}, we know that $\sum_{i \in[-1,n]}\mathbf{W}_i$ is convergent to $\mathbf{W}$ exponentially and $\sum_{j \in[-1,n]}\mathbf{A}_j$ is convergent to $\mathbf{A}$ exponentially. So, based on the locally continuous property, \(model = \sum_{\psi, i, j \in [-1, n]} model_{i, j}\).

Based on Theorem \ref{model expansion}, the inference process can be expanded into the pattern shown in Figure \ref{fig:model expansion}. In this pattern, all $\widetilde{model}$ are low computation resource models whose main computation kernel is low-bit integer or sparse. Each layer computes the quantized activation and tensor multiplication independently, which is shown in the following section, and all reduce the output of each  $\widetilde{model}$.  

\section{The Complexity Analysis of Low-Bit Series Expansion}
Theorem \ref{model expansion} shows the equivalence of the model, but from the perspective of model performance and parallel equivalence, the computational complexity can be further reduced.

\textbf{The Weight Expansion Upper Bound. }
From the low-order expansion of the tensor multiplication of $WA$, when $W$ and $A$ are expanded by $t$ term, we have to calculate $t^2$ matrix multiplication to obtain accurate results in Figure \ref{fig_rank:matrix multiplication expansion}.

However, in post-training quantization, we do not have to compute $t^2$ low-bit matrix multiplications.  For a well-trained model on a dataset and its loss function $\ell$, the loss function's gradient of weight $W$ is zero, i.e., $\frac{\partial \ell}{\partial W}$. Thus, when $W$ is introduced the error, the loss function value $\ell(model(W)) - \ell(model(W+error)) = \frac{\partial \ell}{\partial W}*error = 0$. So from the perspective of the loss function, it makes no sense to expand the $W$ matrix by too many terms. For the matrix expansion $W=W_{sa}+bias_{w}*W_{nsy}+\sum_{i=1}^{n}scale_{w,i}*\widetilde{W}_{i}$ by INTX, the maximum $n$ should satisfy the influence of $scale_n * 2^X$ should be described by total differential. The above condition means that the $scale_n * 2^X$ should be small enough, 
empirically,  $scale_n * 2^X < 10^{-2}$, and usually, the weight parameter only has to expand 2 or 3 terms. Furtherly,  in practice the influence of $A_{sa}$ and  $W_{sa}$ on the loss function is small.

Based on the above analysis, we only need to expand the $A$ matrix multiple times to obtain better performance, and we have to compute $O(t)$ low-order matrix multiplications instead of  $O(t^2)$ low-order matrix multiplications, i.e.,  $model =  \sum_{\uplus, i \in [0,k],j \in [0,t]} \mathbf{scale_{i,j}} \hat{*}\widetilde{model}_{i,j}$ and $k$ is small instead of  $model =  \sum_{\uplus, i,j \in [0,t]} \mathbf{scale_{i,j}} \hat{*}\widetilde{model}_{i,j}$ from the view of loss function.

\textbf{The Computation Complexity of $M_{nsy}$ Multiplication. }We can see the $M_{nsy}$ is the tensor whose all elements are one, which means $M_{nsyn}$ is are low-rank matrix. In fact, $M_{nsyn} = \mathbf{one}^T\mathbf{one}$, where $\mathbf{one}$ is a vector and $\mathbf{one} = (1,1,...,1)$. So, for many matrix multiplication $MM_{nsyn} = M\mathbf{one}^T \mathbf{one} = (M\mathbf{one}^T)\mathbf{one}$. The computation complexity of the latter process is $O(n^2)$.

\begin{table*}[ht]
\renewcommand{\arraystretch}{1.15}
\setlength{\tabcolsep}{11.5pt}
\vspace{-0.1cm}
\caption{
Comparison of our algorithm with various post-training quantization algorithms. Results are averaged over multiple rounds. Our results have significant advantages in extremely low-bit quantization.}\scalebox{0.83}{
\begin{tabular}{cccccccc}

\cline{1-8}
\textbf{Methods} & \textbf{Bits(W/A)}    & \textbf{ResNet-18}                    & \textbf{ResNet-34} & \textbf{ResNet-50}                    & \textbf{ResNet-101} & \textbf{RegNetX-600MF} & \textbf{Inception-V3} \\ \cline{1-8}
Full Prec.       & 32/32                 & 71.01                                 & 73.3               & 76.63                                 & 77.3               & 73.52                  & 77.4               \\ \cline{1-8}
ACIQ~\cite{banner2018aciq}         &                       & 67                                    & 69.1               & 73.8                                  & -                  & -                      & 60.4                 \\
DFQ~\cite{nagel2019data}              &                       & 57.1                                  & -                  & 64.5                                  & -                  & 57.71                  & -                    \\
AdaRound~\cite{nagel2020up}         &                       & 67.96                                 &                    & 73.88                                 & -                  & -                      & -                    \\
AdaQuant~\cite{hubara2020improving}        &                       & 67.4                                  & 70.3               & 73.7                                  & 74.4               & 68.2                   & 72.6                 \\
Seq-AdaQuant     &                       & 69.4                                  & 71.7               & 75.1                                  & 75.5               & -                      & 73.4                \\
PD-Quant~\cite{liu2023pd}       &                       & 69.3                                  & -                  & 75.09                                 & -                  & 70.95                  & -                     \\
\rowcolor[rgb]{0.8,0.8,0.8}\textbf{Ours}    & \multirow{-7}{*}{4/4} & {\textbf{70.37}} & \textbf{72.75}         & { \textbf{77.03}} & \textbf{76.6 }              & \textbf{71.8}          & \textbf{76.09}        \\ \cline{1-8}
LAPQ~\cite{nahshan2021loss}           &                      &0.18                                    & 0.14               & 0.17                                 & -              & 0.12                      & -               \\ AdaRound~\cite{nagel2020up}         &                       &0.11                                &   -                 & 0.12                                & -                  & -                      & -                    \\
CL-Calib~\cite{shang2024enhancing}        &                       & 65.14                                 & -                  & 70.92                                 & -                  & 64.5                 & -                     \\ 
\rowcolor[rgb]{0.8,0.8,0.8}\textbf{Ours}    & \multirow{-4}{*}{2/4} & {  \textbf{70.26}} & \textbf{71.95}         & {  \textbf{74.23}} & \textbf{75.10 }              & \textbf{70.04}          & \textbf{74.27}
 \\

\cline{1-8}
ACIQ~\cite{banner2018aciq}           &                       & 0.12                                  & 0.2                & 0.11                                  & 0.21               & -                      & 0.11                \\ 
BRECQ~\cite{li2021brecq}           &                       & 42.54                                 & -                  & 29.01                                 & -                  & 3.62                   & -                   \\ 
AdaQuant~\cite{hubara2020improving}        &                       & 0.11                                  & -                  & 0.12                                  & 0.14               & -                      & 0.13                   \\ 
PD-Quant~\cite{liu2023pd}        &                       & 53.08                                 & -                  & 56.98                                 & -                  & 55.13                  & -                   \\ 
CL-Calib~\cite{shang2024enhancing}        &                       & 54.45                                 & -                  & 58.3                                  & -                  & 56.39                  & -                     \\ 
\rowcolor[rgb]{0.8,0.8,0.8} \textbf{Ours}    & \multirow{-6}{*}{2/2} & {  \textbf{59.14}} & \textbf{63.58}         & {  \textbf{62.13}} & \textbf{61.64}    & \textbf{59.60}          & \textbf{49.87}        \\ \cline{1-8}
\end{tabular}} \label{taball}
\vspace{-0.3cm}
\end{table*}
\textbf{The Parallelization of Computing $\widetilde{M}_i$. \label{cha:para}} As we can see from the proof of Theorem \ref{matrix expand}, we only use the non-saturation quantization process at the first time to produce the $\widetilde{M}_1$. In the computing process of $\widetilde{M}_i,i>1$, we set the maximum element of $R_i,i>1$ as $scale_{i-1}$. We use this setting because we the property, $scale_i = 2^X scale_{i+1}$, to parallelize the computing process. 

Begin with the non-saturation symmetry case, we can easily gain that the $(i,j)$ element in $\widetilde{M}_k$ can be calculated as $\widetilde{M}_k(i,j)=INTX(\frac{M(i,j)}{scale_{k}}) - INTX(\frac{M(i,j)}{scale_{k-1}})*2^X$. The cases of saturation and non-symmetry cases can be converted into non-saturation symmetry cases by adding $M_{nsy}$ and $M_{sa}$. 



\section{Experiments}
\subsection{Experiment Details} We conduct series expansion quantization experiments on various models on Imagenet~\cite{deng2009imagenet} and NLP tasks~\cite{rajpurkar2016squad, williams2017broad}, including ResNet~\cite{he2016deep}, RegNet~\cite{radosavovic2020designing}, etc. Our method sets hyperparameters consistently on all models and quantizes channel by channel. The basis function selects the class of integer quantization functions. 
We determine the value of $clip^{+/-}$ during saturation quantization to minimize the impact of $M_{sa}$.
In non-Saturated quantization, we use the expected quantization noise in the Laplace distribution as the clipping function. We quantify weights and activations separately. Then the activations of all base models are broadcast and quantized. After low-bit reasoning, All-Reduce outputs all low-bits. For all PTQ experiments, we set the first and last layer quantization to 8-bits. Our code is based on pytorch. All experiments are conducted on a single NVIDIA A800 GPU.

\subsection{Comparison to State-of-the-arts }
\textbf{ImageNet.} We comprehensively compare our algorithm with other PTQ algorithms under multiple-bit settings. In each bit-setting basis function, our algorithm achieve performance improvement. As shown in Table \ref{taball}, when the model is quantized to W4A4, our algorithm is almost consistent with full precision. Even in ResNet-50, our algorithm exceeds full precision. This shows that our algorithm can perform series expansion to replace floating point functions with int-basis functions.

For the more challenging W2A2 quantization, our algorithm achieves the best performance in all networks. Existing quantization methods are almost unusable at W2A2. For example, in the more complex ResNet-101 and Inception-V3, methods such as ACIQ are difficult to perform extremely low-bit quantization. Our method still works even after extremely low-bit quantization, with an accuracy of more than 60\%.
This demonstrates the efficiency of series expansion. For weights and activations quantized at W2A4, our method still delivers the best performance. When the weights are quantized to 2-bits, the accuracy degradation is minimal, indicating that the weights can tolerate larger quantization errors. This further validates the effectiveness of our approach for weight expansion, showing that repeated weight expansion is not necessary.

\begin{table}[ht]
\renewcommand{\arraystretch}{1.15}
\caption{
Performance of the algorithm under different bit-settings, our \textit{ INT} quantization accuracy is close to \textit{FP}.}
\scalebox{0.85}{
\begin{tabular}{lllllllllllll}
\cline{1-6}
\multicolumn{1}{c|}{\textbf{Model}} & \multicolumn{5}{c}{\textbf{ResNet-18}}                      \\ \cline{1-6}
\multicolumn{1}{c|}{Bits}  & \multicolumn{1}{c}{W3A3} & \multicolumn{1}{c}{W2A4}  & \multicolumn{1}{c}{W4A2}  & \multicolumn{1}{c}{W8A8}  & \multicolumn{1}{c}{W32A32} &    \\ \cline{1-6}
\multicolumn{1}{c|}{AdaQuant~\cite{hubara2020improving}}  & \multicolumn{1}{c}{60.09} & \multicolumn{1}{c}{0.11} & \multicolumn{1}{c}{-} & \multicolumn{1}{c}{-} & \multicolumn{1}{c}{71.01} &
\\ \cline{1-6}
\multicolumn{1}{c|}{QDrop~\cite{wei2022qdrop}}  & \multicolumn{1}{c}{65.56} & \multicolumn{1}{c}{64.66} & \multicolumn{1}{c}{57.56} & \multicolumn{1}{c}{-} & \multicolumn{1}{c}{71.06} &
\\ \cline{1-6}
\multicolumn{1}{c|}{BRECQ~\cite{li2021brecq} }  & \multicolumn{1}{c}{65.87} & \multicolumn{1}{c}{64.80} & \multicolumn{1}{c}{-} & \multicolumn{1}{c}{-} & \multicolumn{1}{c}{71.08} &\\ \cline{1-6}
\rowcolor[rgb]{0.8,0.8,0.8} \multicolumn{1}{c|}{\textbf{Ours}}  & \multicolumn{1}{c}{\textbf{68.8}} & \multicolumn{1}{c}{\textbf{70.26}} & \multicolumn{1}{c}{\textbf{60.57}} & \multicolumn{1}{c}{\textbf{71.00}} & \multicolumn{1}{c}{71.01}

\\ \cline{1-6}
\rowcolor[rgb]{0.8,0.8,0.8} \multicolumn{1}{c|}{Quant-Time}  & \multicolumn{1}{c}{5.334s} & \multicolumn{1}{c}{3.236s} & \multicolumn{1}{c}{4.456s} & \multicolumn{1}{c}{0.063s} & \multicolumn{1}{c}{-} \\ \cline{1-6}
\end{tabular}} \label{tabmix}
\vspace{-0.4cm}
\end{table}

\begin{table*}[ht]  
\centering  
\renewcommand{\arraystretch}{1.2}  
\setlength{\tabcolsep}{13pt}  
\caption{  
Comparison of our algorithm with different quantization methods and accuracy time comparison using mixed precision quantization. Our algorithm does not require fine-tuning (FT) and calibration sets.}  
\vspace{-0.3cm}  
\scalebox{0.80}{  
\begin{tabular}{cccccccc}  
\toprule  
\textbf{Models} & \textbf{Method} & \textbf{Bits (W/A)} & \textbf{Accuracy} & \textbf{Model Size} & \textbf{Training Data} & \textbf{Runtime} & \textbf{Calibration/FT} \\   
\midrule  
\multirow{5}{*}{\begin{tabular}[c]{@{}c@{}}\textbf{ResNet-18}\\ (FP:71.01)\end{tabular}}   
& ZeroQ~\cite{cai2020zeroq} & 4/4 & 21.71 & 5.81M & \textbf{0} & 0.008h & 1024 \\   
& DSQ~\cite{gong2019differentiable} & 4/4 & 69.56 & 5.81M & 1.2M & 100h & w/ FT \\   
& PACT~\cite{choi2018pact} & 4/4 & 69.2 & 5.81M & 1.2M & 100h & w/ FT \\   
& QDrop~\cite{wei2022qdrop} & 4/4 & 69.17 & 5.81M & \textbf{0} & 0.43h & 1024 \\   
& PD-Quant~\cite{liu2023pd} & 4/4 & 69.3 & 5.81M & \textbf{0} & 1.11h & 1024 \\   
\rowcolor[rgb]{0.8,0.8,0.8}   
& \textbf{Ours} & 4/4 & \textbf{70.37} & 5.81M & \textbf{0} & \textbf{0.39h} & \textbf{0, w/o FT} \\   
\rowcolor[rgb]{0.8,0.8,0.8}   
& \textbf{Ours} & 2/Mix(2/4/8) & 69.01 & \textbf{3.01M} & \textbf{0} & 2.5h & \textbf{0, w/o FT} \\   
\midrule  
\multirow{4}{*}{\begin{tabular}[c]{@{}c@{}}\textbf{MobileNetV2}\\ (FP:72.49)\end{tabular}}   
& DSQ~\cite{gong2019differentiable} & 4/4 & 64.8 & 2.26M & 1.2M & 192h & w/ FT \\   
& PACT~\cite{choi2018pact} & 4/4 & 61.4 & 2.26M & 1.2M & 192h & w/ FT \\   
\rowcolor[rgb]{0.8,0.8,0.8}   
& \textbf{Ours} & 4/4 & \textbf{71.1} & 2.26M & \textbf{0} & \textbf{1.94h} & \textbf{0, w/o FT} \\   
\rowcolor[rgb]{0.8,0.8,0.8}   
& \textbf{Ours} & 2/Mix(2/4/8) & 65.21 & \textbf{1.18M} & \textbf{0} & 5.5h & \textbf{0, w/o FT} \\   
\bottomrule  
\end{tabular}}  
\label{tabqat}  
\vspace{-0.3cm}  
\end{table*}

In Table \ref{tabqat}, we also show the comparison between the algorithm and different PTQ~\cite{cai2020zeroq} and QAT~\cite{gong2019differentiable, choi2018pact} methods. With the same model size, our quantized model can achieve higher accuracy and does not require additional data for fine-tuning. When using mixed precision quantization, our model can achieve a smaller size and maintain high accuracy. At the same time, our algorithm is parallelizable. Once the algorithm calculates the quantization parameters of each expansion remainder in parallel, no additional calculation is required during inference. In addition, our running time is better than the existing QAT and PTQ methods. Although our method expands multiple  INT models, our algorithm greatly reduces the running time through parallelism, which is less than the existing methods.
The average quantization time per epoch of our algorithm under different bit widths is shown in Table \ref{tabmix}. We also show the accuracy comparison when weights and activations use different bits. Our algorithm can flexibly use different bit widths to quantize the model, and the  INT8 accuracy is almost consistent with the full precision. For hardware that supports  INT8,  INT4, and even  INT2, our algorithm can speed up inference by nearly 40-60\%.

\textbf{SQuAD and MNLI.} As shown in Table \ref{tabblp}, in addition to testing on image data, we also validated the series expansion algorithm in natural language processing (NLP) tasks, applying it to the typical NLP model BERT. Compared to existing methods, our algorithm does not require additional data, whereas the QDROP method necessitates extra data examples. The series expansion algorithm outperforms existing methods and achieves significant performance improvements. This highlights the versatility and strong generalizability of our approach.
\begin{table}[!ht]
\renewcommand{\arraystretch}{1}
\setlength{\tabcolsep}{14pt}
    \centering
    \caption{Performance on NLP tasks compared with existing methods on W4A4.}
    \vspace{-0.1cm}
    \scalebox{0.83}{
    \begin{tabular}{c|c|c}
    \hline
        \textbf{Method} & \textbf{SQuAD1.1(F1)} & \textbf{MNLI(Acc mm)} \\ \hline
        Full Prec. & 88.42 & 84.57 \\ \hline
        AdaQuant~\cite{hubara2020improving} & 5.17 & - \\ \hline
        BRECQ~\cite{li2021brecq}  & 68.58 & 31.91 \\ \hline
        QDrop(no) & 75.97 & 69.19 \\ \hline
        QDrop~\cite{wei2022qdrop} & 77.26 & 71.43 \\ \hline
       \rowcolor[rgb]{0.8,0.8,0.8} \textbf{Ours} & \textbf{79.30} & \textbf{72.31} \\ \hline
    \end{tabular}} \label{tabblp}
    \vspace{-0.3cm}
\end{table}

\subsection{Ablation}
\textbf{Ablation of Non-saturation.}
As shown in Figure \ref{fig_rank}a, we perform saturation and non-saturation ablations on the quantization basis functions. When the clipping function is not used, the quantization performance on the four models is slightly worse, but still better than existing methods. When using the Laplace clipping function, the accuracy of our algorithm is close to full accuracy.
 \begin{figure}
\centering
\includegraphics[width=8.2cm,height=3.3cm]{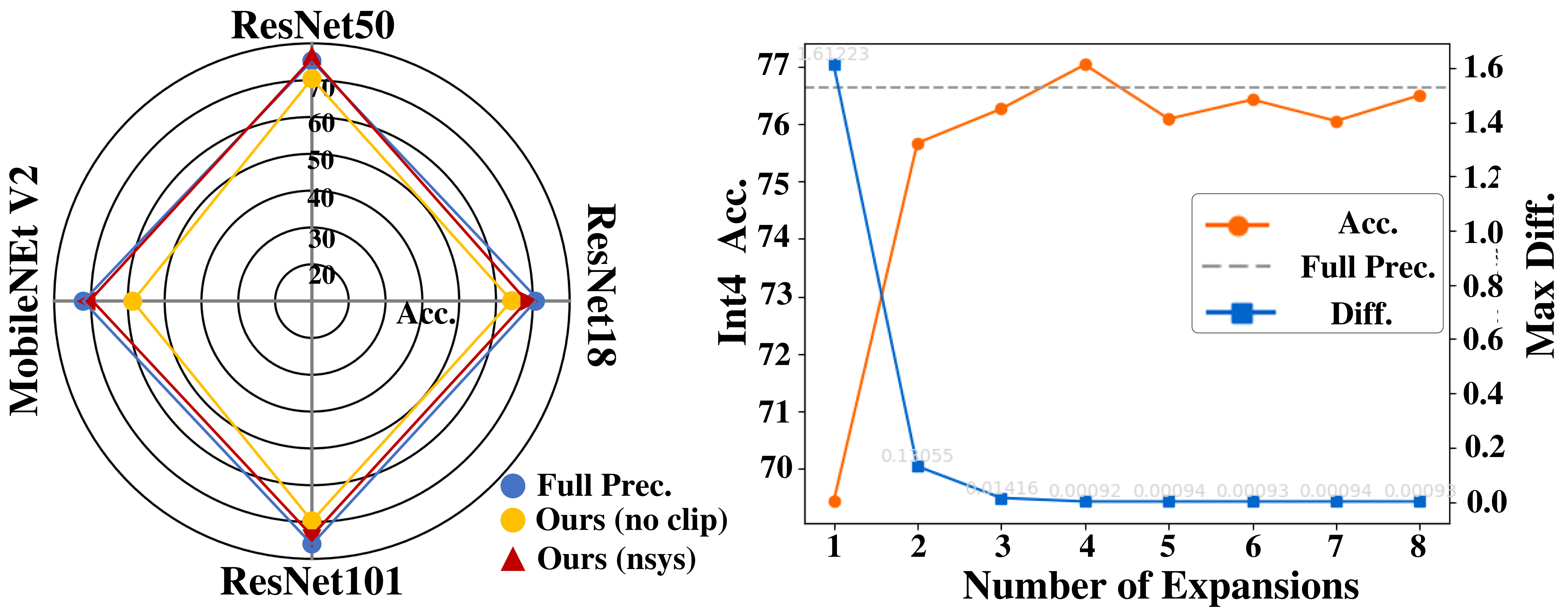}
\vspace{-0.1cm}
\caption{The left sub-figure(a) shows our experiments on saturation and asymmetric quantization. The right sub-figure(b) shows the changes in loss and accuracy as the number of expansions increases.}
\vspace{-0.5cm}
\label{fig_rank}
\end{figure}

\textbf{Ablation of the Expansion.}
As shown by the orange line in Figure \ref{fig_rank}b, when the number of expansions increases, the accuracy of ResNet-50 increases and gradually approaches the accuracy of the original model.
This is due to the decrease in the difference between the activations of the original model and the quantized model, as shown by the blue line.
When the number of expansions is 4, the model has the best accuracy.
When the activations continue to expand and the number exceeds 5, the maximum difference will continue to decrease, but the accuracy will change slightly, while the calculation time will increase significantly. Therefore, in the specific implementation, we stipulate that when the maximum difference is less than $10^{-4}$, the number of expansions is optimal. It is worth mentioning that the error caused by quantization on weights is smaller than that on activations, so the weights are not expanded more than twice.
This also verifies theoretical analysis.

As shown in Table \ref{tabab}, we ablate the accuracy of only expanding weights or activations. When the weights and activations are expanded the same number of times, it can be seen that when only the activations are expanded, the impact on the accuracy is greater than that of only the weights. Therefore, series expansions of the activation quantization are necessary to help improve the accuracy.

\begin{table}[ht]
\centering
\renewcommand{\arraystretch}{1.1}
\setlength{\tabcolsep}{22pt}
\vspace{-0.2cm}
\caption{
Accuracy ablation of only expanding weights and only expanding activations for ResNet model under  INT4 quantization.}
\vspace{-0.2cm}
\scalebox{0.8}{
\begin{tabular}{lllllllllllll}
\cline{1-4}
\multicolumn{1}{c|}{\textbf{Model}}    & \multicolumn{1}{c|}{\textbf{onlyA}} & \multicolumn{1}{c|}{\textbf{onlyW}} & \multicolumn{1}{c}{\textbf{Ours}}  \\ \cline{1-4}
\multicolumn{1}{c|}{\textbf{ResNet-18}} & \multicolumn{1}{c|}{69.06}          & \multicolumn{1}{c|}{68.12}          & \multicolumn{1}{c}{\textbf{70.37}}       \\ \cline{1-4}
\multicolumn{1}{c|}{\textbf{ResNet-50}} & \multicolumn{1}{c|}{75.12}          & \multicolumn{1}{c|}{72.87}          & \multicolumn{1}{c}{\textbf{77.03}}       \\ \cline{1-4}
\end{tabular}} \label{tabab}
\vspace{-0.5cm}
\end{table}

\subsection{Discussion}

\textbf{Series Expansion $\neq$ Ensemble.}
Our approach fundamentally differs from the combination of multiple integer (INT) models. The ensemble of multiple INT models cannot converge to the original model, while our method can converge to the unquantized model, focusing on computing the series expansion of different low-bit models. Ensemble models combine the parameters of multiple similar quantized models but do not achieve convergence. Experimental results show that when existing ensemble methods are used to combine multiple INT models, there is no performance gain, and the performance actually decreases.

\textbf{Generalization to Large Language Models.}
As shown in Table \ref{tabllm}, although our work primarily focuses on small models for specific tasks, we still explore its application in LLMs. Following the quantization settings of W4A16, our approach remains effective in LLMs. On MMLU, our method achieves accuracy nearly equivalent to the original model and surpasses it in certain areas, demonstrating the efficiency and superiority of series expansion. In the future, we will focus on the quantization of LLMs.
\begin{table}
\centering
\renewcommand{\arraystretch}{1.1}
\setlength{\tabcolsep}{8pt}
\caption{
Performance comparison of different 4-bit quantization methods for large language models of different sizes.}\vspace{-0.2cm}
\scalebox{0.8}{
\begin{tabular}{c|cccccccccccc} 
\cline{1-6}
\multirow{2}{*}{Method} & \multicolumn{5}{c}{MMLU ~\cite{hendrycks2020measuring}}                                                                                         &                      &                      &                      &                      &                      &                      &                       \\ 
\cline{2-6}
                        & \textbf{Hums. }               & \textbf{STEM}                 & \textbf{Social}               & \textbf{Other}               & \textbf{Avg.}                                 \\ 
\cline{1-6}
LLaMA3-8B               & \textbf{59.0}          & 55.3                 & 76.0                   & \textbf{71.5}        & \textbf{64.8}                   \\
Normal                  & 56.8                 & 52.9                 & 73.6        & 69.4 &62.5                             \\
\rowcolor[rgb]{0.8,0.8,0.8} \textbf{Ours}                    & 58.9                 & \textbf{55.6}        & \textbf{76.1}        & 71.3                 & 64.7                                 \\ 
\cline{1-6}
LLaMA2-7B \cite{touvron2023llama}              & 43.1                 & 36.4                 & 51.6                 & 52.3                 & 45.7                                     \\
GPTQ~\cite{frantar2022gptq}                    & -                    & -                    & -                    & -                    & 42.8                               \\
QAT~\cite{liu2023llm}                    & -                    & -                    & -                    & -                    & 42.7                                   \\
\textbf{Ours}                    & 42.8                 & \textbf{36.8}        & \textbf{51.9}        & \textbf{52.3}        & \textbf{45.7}                          \\ 
\cline{1-6}
Qwen2.5-3B \cite{qwen2}             & 56.6                 & \textbf{61.5}        & 76.8                 & 70.3                 & \textbf{65.3}                            \\
\rowcolor[rgb]{0.8,0.8,0.8} Ours                    & \textbf{56.6}        & 61.4                 & \textbf{76.8}        & \textbf{70.3}        & 65.2                                   \\ 
\cline{1-6}
\multicolumn{1}{l}{}    & \multicolumn{1}{l}{} & \multicolumn{1}{l}{} & \multicolumn{1}{l}{} & \multicolumn{1}{l}{} & \multicolumn{1}{l}{} & \multicolumn{1}{l}{} & \multicolumn{1}{l}{} & \multicolumn{1}{l}{} & \multicolumn{1}{l}{} & \multicolumn{1}{l}{} & \multicolumn{1}{l}{} & \multicolumn{1}{l}{} 
\end{tabular}}\label{tabllm} \vspace{-0.8cm}
\end{table}

\textbf{Scalability and Basis Function Diversity.}
Our goal is to leverage the higher throughput of INT processing units to achieve higher inference speed while using the same hardware to serve the model. Compared to the original floating-point (FP) model with the same number of hardware units, our method quantizes the model with arbitrary bits (e.g., 2-bit) while maintaining high accuracy. The quantization results under INT8 further demonstrate that our algorithm can achieve near full precision with only integer data types and achieve about 4 times the throughput of FP32 on INT8 processing units. Therefore, our method enables inference acceleration on specific hardware that supports accelerated operators, showing great potential for hardware design and implementation of pure INT-based operators.

Additionally, our method offers a conventional approach to balancing computational efficiency and model performance. Unlike existing complex quantization methods, our solution focuses on applying series expansion to neural networks and incorporates traditional quantization methods.
It is worth mentioning that existing quantization schemes can be embedded in our series expansion algorithm to further improve performance.
Finally, the selection of basis functions can be diversified, meeting both computational efficiency and hardware friendliness requirements. In this study, we used integer models as basis functions, but in practice, we could also use sparse models or other computation-friendly models as the base functions.

\section{Related Work}
\textbf{Series Expansion and Parallel Processing.}
Series expansion~\cite{censor1983finite} is a method of representing a function as an infinite series, which can be used to approximate complex functions. The expanded functions are called basis functions, which can express or approximate other functions through linear combinations.
Traditional basis functions are trigonometric functions and polynomial functions, such as Fourier expansion and Taylor expansion. Because today's computers are good at calculating polynomial functions and matrix operations.
So we can get computational benefits from series expansion by converging functions. However deep models are designed with a serial architecture, which slows down computational efficiency.
So the operation of the expansion terms needs to be processed in parallel. AllReduce is a key operation in parallel computing.
To parallelize a process through AllReduce, its operation must be an Abelian group.
This means that the reduction operation needs to satisfy both the commutative law and the associative law, so the order of the combined values does not affect the result. The reduction operation in AllReduce corresponds to the binary operation in the Abelian group. Series expansion has the advantages of parallelism and accurate approximation. Our algorithm is an extension of traditional expansion in deep learning, overcoming problems such as the curse of dimensionality and inheriting the advantages of parallelism and accuracy.

\textbf{Quantization} is one of the most popular techniques for compressing neural networks, generally divided into two main approaches: Quantization-Aware Training (QAT) and Post-Training Quantization (PTQ). QAT incorporated quantization during the network training phase, whereas PTQ applied quantization after training. PTQ was widely used in network deployment due to its low time and computational resource requirements.
HAWQ-v2~\cite{dong2020hawq} used the trace of a layer's Hessian matrix as an indicator of the layer’s sensitivity. The work~\cite{2019Optimizing} formulated the mixed-precision problem for weights and activations as a Lagrangian optimization problem, where the solution determined optimal precision allocation across weights and activations.
AdaQuant~\cite{hubara2020improving} optimized its parameters on a calibration set to minimize quantization error for each layer individually. PD-Quant ~\cite{liu2023pd} alleviated the overfitting problem in PTQ caused by the small number of calibration sets by adjusting the distribution of activations. Shang et al.~\cite{shang2024enhancing} introduced mutual information into PTQ calibration to optimize the quantization parameters. These methods focus on the quantization parameters to alleviate the error, but cannot achieve good results at very low bits and run slowly. Our method achieves performance and speed improvements at low bits and does not require calibration sets and fine-tuning.
\vspace{-0.1cm}

\section{Conclusion}
This paper proposes a deep model series expansion framework that aims to replace the computationally inefficient original model with multiple computationally efficient basis function models. We use it in PTQ to expand the FP model to the INT version and prove its convergence. Theoretical and experimental results show that the algorithm improves the parallel capability of the model and guarantees the performance of the low-bit model without the need for calibration sets and fine-tuning.

    \small
    \bibliographystyle{ieeenat_fullname}
    \bibliography{main}


\end{document}